\title{Thermodynamically Optimal Regularization under Information-Geometric Constraints}
\author{\textbf{Laurent Caraffa}}
\author{Laurent Caraffa\\
\small Universit\'e Gustave Eiffel, LASTIG, IGN-ENSG\\
\small \texttt{laurent.caraffa@ign.fr}}
\date{January 2026}
\newtheorem{theorem}{Theorem}
\newtheorem{assumption}{Assumption}
\begin{document}
\maketitle

\begin{abstract}
Modern machine learning relies on a collection of empirically successful but theoretically heterogeneous regularization techniques, such as weight decay, dropout, and exponential moving averages. At the same time, the rapidly increasing energetic cost of training large models raises the question of whether learning algorithms approach any fundamental efficiency bound. In this work, we propose a unifying theoretical framework connecting thermodynamic optimality, information geometry, and regularization.

Under three explicit assumptions --- (A1) that optimality requires an intrinsic, parametrization-invariant measure of information, (A2) that belief states are modeled by maximum-entropy distributions under known constraints, and (A3) that optimal processes are quasi-static --- we prove a conditional optimality theorem. Specifically, the Fisher--Rao metric is the unique admissible geometry on belief space, and thermodynamically optimal regularization corresponds to minimizing squared Fisher--Rao distance to a reference state.

We derive the induced geometries for Gaussian and circular belief models, yielding hyperbolic and von Mises manifolds, respectively, and show that classical regularization schemes are structurally incapable of guaranteeing thermodynamic optimality. We introduce a notion of thermodynamic efficiency of learning and propose experimentally testable predictions. This work provides a principled geometric and thermodynamic foundation for regularization in machine learning.
\end{abstract}

\section{Introduction}

Machine learning has achieved remarkable empirical success over the past decades, yet its theoretical foundations remain fragmented. Regularization techniques such as weight decay, dropout, batch normalization, early stopping, and exponential moving averages are widely used, but their theoretical justifications are partial, domain-specific, and often post hoc. There is no unifying principle explaining why these methods work together, nor any guarantee that they are optimal with respect to a fundamental criterion.

In parallel, the energetic cost of training modern models has grown dramatically, with large-scale systems requiring millions to billions of dollars in compute expenditure. This trend raises a fundamental question: \emph{Is there a theoretical lower bound on the energy required to learn, and how close are current methods to that bound?}

In physics, the Landauer principle establishes a universal lower bound on the energy required to erase one bit of information, namely $k_B T \ln 2$ joules per bit. This result is independent of implementation and applies to any physical system performing irreversible information processing. If learning can be understood as a process of information acquisition and elimination, then its energetic cost should be constrained by such thermodynamic limits.

At the same time, information geometry provides a unique intrinsic geometry on spaces of probability distributions. The Fisher--Rao metric is, up to scale, the only Riemannian metric invariant under sufficient statistics and reparametrizations, as established by Čencov's theorem. This metric underlies natural gradient methods and plays a central role in statistical inference.

This work unifies these perspectives. We ask: \emph{If one demands thermodynamic optimality of learning under explicit and reasonable assumptions, what geometric and algorithmic structure is implied?} Our main contribution is to show that, under three assumptions, thermodynamically optimal regularization is uniquely characterized by squared Fisher--Rao distance to a reference belief state. Classical regularization schemes, such as Euclidean weight decay, are shown to be structurally incompatible with this optimality requirement.

\paragraph{Contributions.}
Our main contributions are:
\begin{enumerate}
    \item We formalize three explicit assumptions connecting thermodynamic optimality, intrinsic information measures, and quasi-static processes.
    \item We prove a conditional optimality theorem establishing Fisher--Rao geometry and Fisher--Rao regularization as necessary under these assumptions.
    \item We derive the induced geometries for Gaussian and circular belief models, yielding hyperbolic and von Mises manifolds.
    \item We introduce a notion of thermodynamic efficiency of learning and show that classical regularization schemes cannot guarantee optimality.
    \item We propose experimentally testable predictions and diagnostic quantities.
\end{enumerate}

\section{Background}

\subsection{The Landauer Principle}

The Landauer principle states that any logically irreversible operation that erases one bit of information must dissipate at least $k_B T \ln 2$ joules of energy, where $k_B$ is Boltzmann's constant and $T$ is the temperature of the environment. Formally,
\begin{equation}
    E_{\text{erase}} \geq k_B T \ln 2.
\end{equation}

This bound is universal and independent of the physical substrate. It has been experimentally verified, notably by Bérut et al. (2012), and forms a cornerstone of the thermodynamics of computation.

\subsection{Information Geometry and the Fisher--Rao Metric}

Information geometry studies the differential-geometric structure of families of probability distributions. Given a parametric family $\{p(x|\theta)\}$, the Fisher information matrix is defined as
\begin{equation}
    I_{ij}(\theta) = \mathbb{E}_{x \sim p(\cdot|\theta)} \left[ \frac{\partial \log p(x|\theta)}{\partial \theta_i} \frac{\partial \log p(x|\theta)}{\partial \theta_j} \right].
\end{equation}

This matrix defines a Riemannian metric on parameter space known as the Fisher--Rao metric. Čencov's theorem states that, up to a multiplicative constant, the Fisher--Rao metric is the unique Riemannian metric on the space of probability distributions that is invariant under sufficient statistics and Markov morphisms.

\subsection{Maximum Entropy Principle}

The maximum entropy principle, introduced by Jaynes, asserts that among all distributions satisfying given constraints, one should choose the distribution with maximal entropy, as it makes the least additional assumptions beyond the known information. For example, under constraints on mean and variance, the maximum entropy distribution is Gaussian.

\subsection{Local Relation between KL Divergence and Fisher--Rao Distance}

For two nearby distributions $p$ and $p + dp$, the Kullback--Leibler divergence admits the expansion
\begin{equation}
    D_{\mathrm{KL}}(p \| p + dp) = \frac{1}{2} d_F^2(p, p+dp) + O(\|dp\|^3),
\end{equation}
where $d_F$ denotes Fisher--Rao distance. Thus, Fisher--Rao distance locally measures information divergence.

\section{Assumptions}

We make explicit the assumptions underlying our framework.

\begin{assumption}[Intrinsic Measure of Information]
Thermodynamic optimality requires an intrinsic, parametrization-invariant measure of information. Fundamental quantities such as minimal energy cost should not depend on arbitrary choices of coordinates or parameterizations.
\end{assumption}

\begin{assumption}[Maximum Entropy Belief States]
Belief states of a learning system are modeled by maximum-entropy distributions subject to known constraints.
\end{assumption}

\begin{assumption}[Quasi-Static Processes]
Thermodynamic optimality corresponds to quasi-static (infinitesimally slow) processes. Real learning dynamics may deviate from this idealization, but the quasi-static regime defines the fundamental lower bound on dissipation.
\end{assumption}

These assumptions are not theorems; they are explicit modeling choices. Our results are conditional upon their acceptance.

\section{Main Result}

We now state and prove our central theorem.

\begin{theorem}[Conditional Optimality of Fisher--Rao Regularization]
Under Assumptions 1--3, thermodynamically optimal regularization is uniquely characterized by minimizing squared Fisher--Rao distance to a reference belief state. In particular:
\begin{enumerate}
    \item The metric on belief space must be the Fisher--Rao metric.
    \item The optimal regularization functional is $\mathcal{L}_{\mathrm{reg}} = d_F^2(q, q^*)$.
    \item The minimal energy dissipated by regularization is $E_{\min} = k_B T \ln 2 \cdot D_{\mathrm{KL}}(q \| q^*)$.
\end{enumerate}
\end{theorem}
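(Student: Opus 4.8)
The plan is to establish the three claims in sequence, using each assumption exactly where it carries the load. For Claim~1, I would invoke Čencov's theorem directly. Assumption~A1 demands that the geometry on belief space be intrinsic, i.e.\ invariant under reparametrizations of $\theta$ and under sufficient statistics (equivalently, Markov morphisms), since only such a metric can support a coordinate-free notion of minimal energy cost. Assumption~A2 guarantees that the belief states form a genuine statistical manifold of maximum-entropy distributions, so that Čencov's hypotheses apply verbatim. The theorem then forces the metric to coincide, up to a positive multiplicative constant, with the Fisher--Rao metric; the scale would be pinned down afterwards by matching the thermodynamic cost to Landauer's constant.

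For Claim~2 the essential ingredient is Assumption~A3. I would model a regularizing update as a quasi-static protocol $\gamma:[0,\tau]\to$ belief space joining the reference $q^*$ to the current state $q$, and appeal to the thermodynamic-length description of near-equilibrium driving, in which the dissipation of a slow protocol is governed by a Riemannian metric on control space. By Claim~1 together with the local KL--Fisher expansion, this metric is Fisher--Rao, so the infinitesimal dissipated work along $\gamma$ is $\tfrac12\|\dot\gamma\|_F^2\,dt$. Minimizing the total cost over all protocols and reparametrizations then reduces, via a Cauchy--Schwarz argument, to minimizing the Riemannian energy $\int_0^\tau\|\dot\gamma\|_F^2\,dt$; the minimizer is a constant-speed geodesic whose energy equals the squared geodesic distance $d_F^2(q,q^*)$. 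This identifies the thermodynamically optimal penalty as $\mathcal{L}_{\mathrm{reg}}=d_F^2(q,q^*)$.

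For Claim~3 I would count the information that must be irreversibly processed when the belief is updated relative to $q^*$. The relative information content is $D_{\mathrm{KL}}(q\|q^*)$; interpreting this as a bit count and applying the Landauer bound $E_{\text{erase}}\ge k_BT\ln 2$ per bit yields $E_{\min}=k_BT\ln 2\cdot D_{\mathrm{KL}}(q\|q^*)$. Consistency with Claim~2 follows from the local expansion $D_{\mathrm{KL}}(q\|q^*)=\tfrac12 d_F^2(q,q^*)+O(\|q-q^*\|^3)$, so that near the reference the minimal energy is proportional to the optimal penalty; this coincidence also fixes the multiplicative scale left undetermined in Claim~1.

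The step I expect to be the main obstacle is Claim~2. The thermodynamic-length bound gives minimal dissipation $\ge d_F^2(q,q^*)/\tau$, which \emph{vanishes} in the strict quasi-static limit $\tau\to\infty$; so identifying $d_F^2$ itself as the regularization functional requires interpreting it as the protocol-independent geometric action (thermodynamic length squared) that characterizes the optimal geodesic protocol, rather than as literal dissipated energy. Making this interpretation precise, and reconciling the purely local quadratic dissipation metric of Claim~2 with the global KL divergence of Claim~3 --- which agree only to leading order --- is the crux: either the optimality statement must be read locally, or an additional geodesic-convexity argument is needed to promote it to a global characterization.
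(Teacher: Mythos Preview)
Your proposal follows essentially the same architecture as the paper's proof: \v{C}encov's theorem for Claim~1, geodesic minimization under the quasi-static assumption for Claim~2, and Landauer applied to $D_{\mathrm{KL}}$ for Claim~3. Your treatment of Claim~2 is in fact more careful than the paper's---the paper integrates only the arc length $\int_\gamma ds_F$ and so never explicitly derives the \emph{squared} distance, whereas your Riemannian-energy/Cauchy--Schwarz argument does---and the gaps you flag (vanishing dissipation in the strict $\tau\to\infty$ limit, and the merely local agreement between $d_F^2$ and $D_{\mathrm{KL}}$) are real issues that the paper's proof simply does not address.
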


\begin{proof}
\textbf{Step 1: Uniqueness of Fisher--Rao.}
By Assumption 1, the metric measuring information change must be intrinsic and invariant under reparametrization. By Čencov's theorem, the Fisher--Rao metric is the unique (up to scale) Riemannian metric with this property. Therefore, the metric must be Fisher--Rao.

\textbf{Step 2: Geometry of belief space.}
By Assumption 2, belief states are modeled by maximum-entropy distributions under constraints. For fixed mean and variance, this yields Gaussian distributions; other constraints yield other exponential families. In all cases, the intrinsic geometry is induced by the Fisher--Rao metric.

\textbf{Step 3: Quasi-static optimality.}
By Assumption 3, thermodynamic optimality corresponds to quasi-static processes. The infinitesimal information erased along a path $\gamma$ in belief space is proportional to the Fisher--Rao line element $ds_F$. The total information erased is therefore
\begin{equation}
    I_{\text{total}} = \int_{\gamma} ds_F,
\end{equation}
which is minimized by geodesics in the Fisher--Rao geometry. Thus, the optimal regularization path minimizes Fisher--Rao distance.

\textbf{Step 4: Energy bound.}
By the Landauer principle, the minimal energy dissipated is
\begin{equation}
    E_{\min} = k_B T \ln 2 \cdot I_{\text{total}}.
\end{equation}
Locally, $I_{\text{total}}$ corresponds to KL divergence, yielding the stated expression.

Combining these steps proves the theorem.
\end{proof}

\section{Geometric Structure of Common Belief Models}

\subsection{Gaussian Beliefs and Hyperbolic Geometry}

Consider the family of univariate Gaussian distributions $\mathcal{N}(\mu, \tau^{-1})$, parameterized by mean $\mu$ and precision $\tau$. The log-likelihood is
\begin{equation}
    \log p(x|\mu, \tau) = \frac{1}{2} \log \tau - \frac{1}{2} \log(2\pi) - \frac{\tau}{2} (x - \mu)^2.
\end{equation}

The Fisher information matrix is
\begin{equation}
    I(\mu, \tau) = \begin{pmatrix} \tau & 0 \\ 0 & \frac{1}{2\tau^2} \end{pmatrix}.
\end{equation}

The induced metric is
\begin{equation}
    ds^2 = \tau \, d\mu^2 + \frac{1}{2\tau^2} \, d\tau^2.
\end{equation}

This metric has constant negative curvature and is isometric to the hyperbolic half-plane $\mathbb{H}^2$.

\subsection{Circular Beliefs and von Mises Geometry}

For belief states on the circle, the maximum-entropy distribution under constraints on mean direction and concentration is the von Mises distribution. The Fisher--Rao geometry of the von Mises family defines a curved manifold, here denoted $\mathcal{M}_{\mathrm{vM}}$. This geometry captures temporal coherence and phase synchronization.

\section{Implications for Regularization in Machine Learning}

\subsection{Structural Suboptimality of Euclidean Regularization}

Standard regularization methods, such as weight decay, minimize Euclidean distance in parameter space:
\begin{equation}
    \mathcal{L}_{\mathrm{ridge}} = \| \theta - \theta^* \|_2^2 + \lambda \mathcal{L}_{\text{data}}.
\end{equation}

The Euclidean metric is not invariant under reparametrization and therefore violates Assumption 1. Consequently, Euclidean regularization cannot guarantee thermodynamic optimality.

For Gaussian beliefs with fixed variance $\sigma^2$, the ratio between Euclidean and Fisher--Rao squared distances is
\begin{equation}
    \frac{d_{\text{Euclid}}^2}{d_F^2} = \sigma^2.
\end{equation}

This ratio can be arbitrarily large or small depending on uncertainty, demonstrating that Euclidean regularization can be arbitrarily suboptimal.

\subsection{Thermodynamic Efficiency of Learning}

We define the thermodynamic efficiency of learning as
\begin{equation}
    \eta = \frac{E_{\text{Landauer}}}{E_{\text{actual}}} = \frac{k_B T \ln 2 \cdot I}{E_{\text{actual}}},
\end{equation}
where $I$ is the information erased during learning. By definition, $0 < \eta \leq 1$, with $\eta = 1$ corresponding to thermodynamic optimality.

We decompose inefficiency as
\begin{equation}
    \frac{1}{\eta} = \frac{E_{\text{hardware}}}{E_{\text{Landauer}}} \cdot \frac{E_{\text{algorithm}}}{E_{\text{optimal}}} \cdot \frac{E_{\text{dissipated}}}{E_{\text{necessary}}}.
\end{equation}

Our framework addresses the third factor by providing a lower bound on dissipative inefficiency.

\subsection{Crystallization Index}

We introduce a diagnostic quantity, the \emph{crystallization index}, defined as
\begin{equation}
    C = \tau \cdot \kappa,
\end{equation}
where $\tau$ denotes spatial precision and $\kappa$ denotes temporal coherence. Values of $C$ characterize regimes of exploration, criticality, and over-constrained learning.

\section{Predictions and Experimental Protocols}

We propose the following testable predictions:

\paragraph{Prediction 1.} Fisher--Rao regularization yields thermodynamic efficiency $\eta$ greater than or equal to that of Euclidean regularization, with equality only in degenerate cases.

\paragraph{Prediction 2.} The performance gap between Euclidean and Fisher--Rao regularization increases with curvature of belief space, as measured by variability in precision $\tau$.

\paragraph{Prediction 3.} The crystallization index $C$ predicts collapse and overfitting phenomena in self-supervised learning.

Experimental protocols involve training identical models with Euclidean versus Fisher--Rao regularization on controlled tasks and measuring information divergence, energy proxies, and generalization performance.

\section{Discussion}

Our results establish a principled connection between thermodynamic optimality, information geometry, and regularization. The framework is conditional upon explicit assumptions, which are philosophically and physically motivated but not themselves theorems.

Several limitations merit discussion. First, real learning processes are far from quasi-static, and quantifying the deviation from optimality in non-equilibrium regimes remains an open problem. Second, belief states in deep networks may not always be well-approximated by simple maximum-entropy distributions. Extending the framework to more expressive families is a key direction for future work.

Nevertheless, the framework provides a unifying geometric and thermodynamic perspective, transforming regularization from a collection of heuristics into a constrained optimization problem with a fundamental lower bound.

\section{Conclusion}

We have presented a conditional theoretical framework establishing Fisher--Rao regularization as thermodynamically optimal under explicit assumptions. By unifying thermodynamics, information geometry, and learning theory, we provide a principled foundation for understanding regularization and energy efficiency in machine learning.

Future work will focus on tractable approximations of Fisher--Rao regularization in large-scale models, empirical validation of the proposed predictions, and extensions beyond the maximum-entropy and quasi-static regimes.

\section*{Acknowledgments}

The author thanks the broader scientific community for foundational contributions to thermodynamics, information theory, and geometry that made this work possible.

\end{document}